%% file: main.tex
\documentclass{article}

\usepackage{arxiv}

\usepackage[utf8]{inputenc} 
\usepackage[T1]{fontenc}    
\usepackage{hyperref}       
\usepackage{url}            
\usepackage{booktabs}       
\usepackage{amsfonts}       
\usepackage{nicefrac}       
\usepackage{microtype}      
\usepackage{lipsum}		
\usepackage{graphicx}
\usepackage{natbib}
\usepackage{doi}
\usepackage{amsmath}
\usepackage{amsthm}
\usepackage{amssymb}
\usepackage{todonotes}
\usepackage{verbatim}
\usepackage{subcaption}

\usepackage{algorithm}
\usepackage[switch]{lineno}
\usepackage[noend]{algpseudocode}

\usepackage{tikz}
\usetikzlibrary{arrows.meta}

\newcommand{\AS}{\mathcal{F}}
\newcommand{\A}{\mathcal{A}}
\newcommand{\C}{\mathcal{C}}

\newcommand{\Att}{Att}

\newcommand{\VPE}{\mathcal{EX}}

\newcommand{\out}{\textbf{out}}
\newcommand{\lin}{\textbf{in}}
\newcommand{\undec}{\textbf{undec}}
\newcommand{\RANK}{\mathcal{RANK}}

\newtheorem{example}{Example}

\newtheorem{corollary}{Corollary}
\newtheorem{definition}{Definition}
\newtheorem{proposition}{Proposition}

\title{On the Existence of an Inverse Solution for Preference-Based Reductions in Argumentation}

\author{
Alessio Zaninotto$^1$\hfill
Bruno Yun$^2$\hfill
Nir Oren$^3$\hfill
Srdjan Vesic$^1$\\
$^1$CRIL - Université d'Artois, CNRS\\
$^2$Universite Claude Bernard Lyon 1, CNRS, Ecole Centrale de Lyon, INSA Lyon,\\ Université Lumière Lyon 2, LIRIS, UMR5205, 69622 Villeurbanne, France\\
$^3$University of Aberdeen\\
zaninotto@cril.fr,
bruno.yun@univ-lyon1.fr,
n.oren@abdn.ac.uk,
vesic@cril.fr
}

\begin{document}

\maketitle

\begin{abstract}
\input{abstract}
\end{abstract}

\section{Introduction}

\input{introduction}

\section{Background}

\input{background}

\section{Studying Preference-Based Reductions}

\input{contribution}

\section{Related and Future Work}
\label{Section: related works}
\input{related_work}

\section{Conclusions}
\label{Section:conclusions}

\input{conclusion}

\bibliographystyle{abbrvnat}
\bibliography{biblio}

\end{document}

%% file: abstract.tex
Preference-based argumentation frameworks (PAFs) extend Dung's approach to abstract argumentation (AAFs) by encoding preferences over arguments. Such preferences control the transformation of attacks into defeats, and different approaches to doing so result in different reductions from a PAF to an AAF.
In this paper we consider a PAF inverse problem which takes an argumentation graph, a labelling and a semantics as an input, and outputs a ``yes" or ``no" as to whether there is a preference relation between the arguments which can yield the desired labelling. This inverse problem has applications in areas including preference elicitation and explainability. We consider this problem in the context of the four most widely-used preference based reductions under the complete semantics. We show that in most cases, the problem can be answered in polynomial time.


%% file: introduction.tex
Argumentation is a widely adopted approach to non-monotonic reasoning with diverse applications in domains as varied as automated planning \citep{DBLP:journals/ijar/TezeGS22,DBLP:conf/nmr/HulstijnT04} and medical decision-making \citep{DBLP:conf/ecai/0002CV20,DBLP:conf/clima/FanCSTW13,DBLP:journals/iswa/CaropreseVZ22}. The notion of an argument underpins the approach, and argumentation theory allows one to determine which arguments are --- in some sense --- justified based on interactions between arguments, with such interactions most commonly taking the form of attacks. While \emph{structured} argumentation \citep{DBLP:journals/argcom/ModgilP14,DBLP:journals/argcom/Toni14,DBLP:conf/comma/YunVC20} concerns itself with the problem of how arguments are constructed and which arguments interact with others, \emph{abstract} argumentation treats arguments and inter-argument interactions as a given, and allows one to compute justified arguments according to specific \emph{argumentation semantics}.

The seminal work of \cite{dung_acceptability_1995} on abstract argumentation considered only arguments and binary attacks as inter-argument interactions. However, myriad extensions to this basic framework have been considered in the literature, including bipolar argumentation systems (which introduce the notion of support), argumentation systems with sets of attacking arguments \citep{DBLP:conf/ecsqaru/YunV21,DBLP:conf/argmas/NielsenP06}, weighted argumentation systems \citep{DBLP:conf/ecai/BistarelliS10,DBLP:conf/atal/DunneHMPW09}, and preference-based argumentation systems \citep{DBLP:conf/uai/AmgoudC98,DBLP:conf/comma/KaciTV18}, among others. The latter allows one to specify a preference ordering over pairs of arguments. These preferences are then used to determine whether an attack between two arguments is preserved, removed, or reversed according to a specific ``reduction'' process. An argumentation semantics over preference-based argumentation systems thus considers the resulting reduced argumentation framework rather than the original system to determine which sets of arguments are justified.
We call this process \textit{post-reduction inference with argumentation semantics} (PIAS).

There has been a recent wave of interest in the so-called inverse problems related to argumentation 
\citep{kido2022bayesian,e1bd0ada23b14b13a8b33b9869e604fb,mumford22complexity}. 
Inverse problems aim to determine unknown input parameters from observed outputs and known inputs. In this paper, we study such an inverse problem in the context of PIAS. The inputs consist of an argumentation framework, an argumentation semantics, and a labelling specifying the justification status of arguments. The objective is to identify the preferences between arguments that when combined with the given semantics yield the desired labelling for the argumentation framework. We note that Mahesar, et al. [\citeyear{DBLP:journals/corr/abs-2403-17653}] previously identified some necessary conditions for preferences over specific argument graph features (e.g., ``if argument $A$ attacks argument $B$ and both arguments are not defended by any arguments, then $A$ must be preferred to $B$ if $A$ is justified and $B$ is not'') under a reduction where preferences reverse arguments. However, they considered only one possible reduction with a relatively limited analysis of correctness, completeness and complexity. In contrast, we consider PIAS under four well-known reductions and directly target the question of when/whether a solution exists. The main contributions of this paper are as follows.

\begin{enumerate}
    \item A unified framework to discuss all four preference-based reductions and their associated inverse problems.
    \item A complexity analysis for the existence of solutions for those inverse problems.
\end{enumerate}

The inverse problem we consider allows one to infer a set of preferences given an argumentation framework, semantics and conclusions, effectively eliciting a set of preferences. Preference elicitation is an important problem across multiple domains, including economics \citep{10.1093/bmb/lds020}, public policy \citep{morgan2014use}, and recommender systems \citep{priyogi2019preference}. The ability to infer preferences from arguments opens up interesting possibilities and applications, which we discuss in Section \ref{Section:conclusions}.

The remainder of this paper is structured as follows. In the next section, we provide the necessary background around abstract and preference-based argumentation.
%
Section \ref{sec:pref-red1} forms the main contribution of this paper, i.e., we formally introduce the inverse problem and study the solutions for the four cases at hand.
Finally, we discuss the system and related work in Section \ref{Section: related works} before concluding  in Section \ref{Section:conclusions}.

%% file: background.tex
In abstract argumentation, arguments are atomic entities that interact through attacks. We recall the basic notions of abstract argumentation \citep{dung_acceptability_1995}.

\begin{definition}[AAF]
An (abstract) argumentation framework (AAF) is a pair $\AS = (\A, \C)$, where $\A$ is a set of arguments and $\C \subseteq \A \times \A$ is a set of attacks.
\end{definition}

We say $S \subseteq \A$ \emph{defends} $b$ iff  for all $c \in \A$ such that  $(c,b) \in \C$ there exists an $s \in S$ such that $(s,c) \in \C$.
The set of all \emph{direct attackers} of $a \in \A$ is denoted as $\Att(a) = \{ b\in \A \mid (b,a) \in \C\}$. 
A directed (resp. undirected) path from argument $a$ to $b$ (in $\AS$) is a sequence of arguments $(a_1, \dots, a_n)$ such that $a_1 = a, a_n = b$, and for all $ 1\leq i \leq n-1$, $ (a_i, a_{i+1}) \in \C$ (resp. $(a_i, a_{i+1}) \in \C$ or $(a_{i+1}, a_{i}) \in \C$).
%



Labellings \citep{DBLP:conf/jelia/Caminada06} are a popular approach to computing argumentation semantics. Here, arguments are labelled as  $\lin$, $\out$ or $\undec$ according to a fixed set of rules, reflecting their justification status. Most of the usual Dung \citeyear{dung_acceptability_1995} semantics comply with the standard approach to complete labellings, defined as follows. 

\begin{definition}[Complete labelling]\label{def:complete labelling}
Given $\AS = (\A,\C)$, a complete labelling (of $\A$) in $\AS$ is a function $l: \A \to \{\lin, \out, \undec\}$ 
such that for every $a \in \A$:
\begin{itemize}
    \item $l(a) = \lin $ iff for all $b \in \A,$ if $ (b,a) \in \C$ then $l(b) = \out$;
    \item $l(a) = \out$ iff  $\exists b \in \A$ such that $ (b,a) \in \C$ and $l(b) = \lin$;
    \item $l(a) = \undec$ otherwise.
\end{itemize}
\end{definition}

A labelling can also be denoted by a triple $(I, O, U)$, where $I, O, $ and $U$ are the set of arguments labelled $\lin$, $\out$, and $\undec$ respectively. When represented visually, we will color arguments in $I$, $O$, and $U$ with green, red, and gray respectively.
Given an AF $\AS$, we abbreviate
\[
        \AS_U=(U,\C\cap (U\times U)),
\]
for the subgraph obtained from $\AS$ by restriction to the $\undec$ labelled arguments, and similarly for $\AS_I$ and $\AS_O$.

\begin{example}\label{example:1}

Consider the argumentation framework $\AS = (\A, \C)$, where $\A = \{a,b,c,d\}$ and $\C  = \{ (a,b), (a,c),  (c,a), (b,c), (c,b),$ $(d,c), (c,d)\}$.
%
There are three complete extensions: $(\emptyset, \emptyset, \{a,b,c,d\})$, $(\{a,d\}, \{b,c\}, \emptyset)$, and $(\{c\}, \{a,b,d\}, \emptyset)$. The second of these is shown in Figure \ref{Fig:1a}.

\end{example}

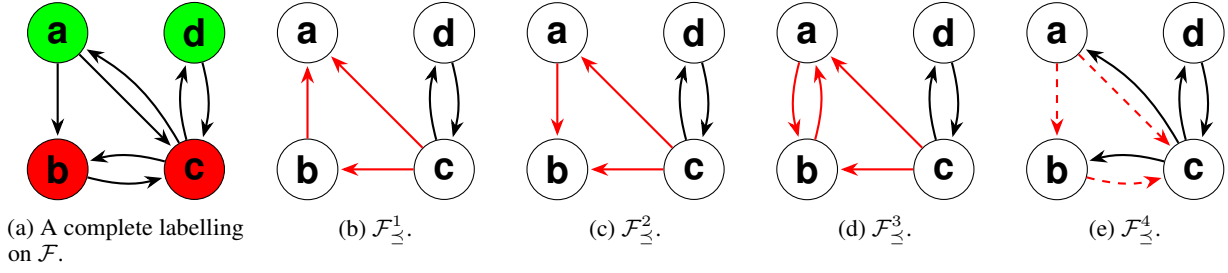
\begin{figure*}
   \centering
\begin{subfigure}[t]{0.19\textwidth}
\centering
\begin{tikzpicture}[
    scale=0.12,
    main node/.style={circle, draw, font=\sffamily\Large\bfseries, minimum size=0.8cm},
    edge style/.style={->, >=Stealth, thick, shorten >=1pt}
]

    \node[main node, fill=green] (a) {a};
    \node[main node, fill=red] (b) [below=of a] {b};

    \node[main node, fill=green] (d) [right=of a] {d};
    \node[main node, fill=red] (c) [below=of d] {c};

    \path[edge style]
        (a) edge (b)
        (a) edge (c)
        (b) edge [bend right=15] (c)
        (c) edge [bend right=15] (a)
        (c) edge [bend right=15] (b)
        (c) edge [bend left=15] (d)
        (d) edge [bend left=15] (c);

\end{tikzpicture}
\caption{A complete labelling on $\AS$.}
\label{Fig:1a}
\end{subfigure}\hfill
\begin{subfigure}[t]{0.19\textwidth}
\centering
\begin{tikzpicture}[
    scale=0.12,
    main node/.style={circle, draw, font=\sffamily\Large\bfseries, minimum size=0.8cm},
    edge style/.style={->, >=Stealth, thick, shorten >=1pt}
]

    \node[main node] (a) {a};
    \node[main node] (b) [below=of a] {b};

    \node[main node] (d) [right=of a] {d};
    \node[main node] (c) [below=of d] {c};

    \path[edge style]
        (b) edge [red] (a)
        (c) edge [red] (a)
        (c) edge [red] (b)
        (c) edge [bend left=15] (d)
        (d) edge [bend left=15] (c);

\end{tikzpicture}
\caption{$\AS^1_\preceq$.}
\label{fig:1b}
\end{subfigure}\hfill
\begin{subfigure}[t]{0.19\textwidth}
\centering
\begin{tikzpicture}[
    scale=0.12,
    main node/.style={circle, draw, font=\sffamily\Large\bfseries, minimum size=0.8cm},
    edge style/.style={->, >=Stealth, thick, shorten >=1pt}
]

    \node[main node] (a) {a};
    \node[main node] (b) [below=of a] {b};

    \node[main node] (d) [right=of a] {d};
    \node[main node] (c) [below=of d] {c};

    \path[edge style]
        (a) edge [red] (b)
        (c) edge [red] (a)
        (c) edge [red] (b)
        (c) edge [bend left=15] (d)
        (d) edge [bend left=15] (c);

\end{tikzpicture}
\caption{$\AS^2_\preceq$.}
\label{fig:1c}
\end{subfigure}\hfill
\begin{subfigure}[t]{0.19\textwidth}
\centering
\begin{tikzpicture}[
    scale=0.12,
    main node/.style={circle, draw, font=\sffamily\Large\bfseries, minimum size=0.8cm},
    edge style/.style={->, >=Stealth, thick, shorten >=1pt}
]

    \node[main node] (a) {a};
    \node[main node] (b) [below=of a] {b};
    \node[main node] (d) [right=of a] {d};
    \node[main node] (c) [below=of d] {c};

    \path[edge style]
        (a) edge [red, bend right=15] (b)
        (b) edge [red, bend right=15] (a)
        (c) edge [red] (a)
        (c) edge [red] (b)
        (c) edge [bend left=15] (d)
        (d) edge [bend left=15] (c);
\end{tikzpicture}
\caption{$\AS^3_\preceq$.}
\label{fig:1d}
\end{subfigure}\hfill
\begin{subfigure}[t]{0.19\textwidth}
\centering
\begin{tikzpicture}[
    scale=0.12,
    main node/.style={circle, draw, font=\sffamily\Large\bfseries, minimum size=0.8cm},
    edge style/.style={->, >=Stealth, thick, shorten >=1pt}
]

    \node[main node] (a) {a};
    \node[main node] (b) [below=of a] {b};

    \node[main node] (d) [right=of a] {d};
    \node[main node] (c) [below=of d] {c};

    \path[edge style]
        (a) edge [red, dashed] (b)
        (a) edge [red, dashed] (c)
        (b) edge [bend right=15,red, dashed] (c)
        (c) edge [bend right=15] (a)
        (c) edge [bend right=15] (b)
        (c) edge [bend left=15] (d)
        (d) edge [bend left=15] (c);

\end{tikzpicture}

\caption{$\AS^4_\preceq$.}
\label{fig:1e}
\end{subfigure}
\label{fig1}
\caption{An argumentation graph, a complete labelling, and its four reductions.}
\end{figure*}

Preference based argument frameworks extend AAFs to include preferences over arguments, representing the relative importance or desirability of arguments according to some (e.g., a reasoner's) viewpoint.
While such preferences are often modeled as partial orders \citep{DBLP:conf/uai/AmgoudC98,DBLP:conf/sum/AmgoudV10}, we restrict our attention to specific orderings to eliminate incomparability, lower computational complexity, produce clearer explanations, and better align our work with practical preference elicitation and dialogical reasoning. We note that this restriction has no practical effect on our results.

We recall that a total order $\preceq$ on a set elements $E$ is a reflexive, antisymmetric, transitive, and total relation over the set. For all $e_1, e_2 \in E$, $e_1 \preceq e_2$ means that $e_2$ is equal or more preferred than $e_1$. As usual, we will use the notation $e_1 \prec e_2$ iff $e_1 \preceq e_2$ and $e_2 \not \preceq e_1$; and $e_1 \simeq e_2$ iff $e_1 \preceq e_2$ and $e_2 \preceq e_1$.

Given a binary relation $R$, we write $\overline{R}$ for its converse, i.e.,  $\overline{R} = \{(b,a)\mid (a,b)\in R\}$.

When an argumentation framework decomposes into multiple connected components (CCs), deriving preferences between distinct components is not meaningful, since arguments in different components do not mutually influence each other and thus provide no information on preferences. Thus, we formalize the notion of CC-wise total orders.

\begin{definition}[Connected component]
Let $\AS = (\A, \C)$ be an abstract argumentation framework.
A set $S \subseteq \A$ is a \emph{connected component} (CC) of $\AS$ iff: (1)  $\forall a,b \in S$, there exists an undirected path from $a$ to $b$ and (2) $S$ is maximal with respect to set inclusion among the sets satisfying (1).
\end{definition}

\begin{definition}[CC-wise total order]
Given an argumentation framework $\AS = (\A,\C)$, a CC-wise total order $\preceq$ on $\AS$ is a partial order on $\A$ such that, for every connected component $S \subseteq \A$ of $\AS$, the restriction of $\preceq$ to $S$ is a total order on $S$.
\end{definition}

When an argumentation framework contains multiple disconnected components the number of possible preference orderings across the entire framework grows combinatorially. For example, in the AAF $(\{a,b,c,d\},\{(a,b),(c,d)\})$, if we fix $b \preceq a$ and $d \preceq c$, the remaining preferences between arguments from different components can take any ordering. From a computational perspective, this combinatorial explosion is unavoidable. 
We therefore focus instead on the set of possible preference orderings within each connected component, which is both more tractable and more meaningful, as preferences between disconnected components do not influence the evaluation of arguments. 

We thus treat a \emph{preference-based argumentation framework} \citep{DBLP:conf/uai/AmgoudC98} as a triple $\AS=(\A,\C,\preceq)$ where $(\A,\C)$ is an AAF, and $\preceq$ is CC-wise total order on $\AS$. 
Following the intuition of previous works \citep{DBLP:conf/comma/KaciTV18}, we note that a preference-based argumentation framework can be transformed into an abstract argumentation framework by transforming attacks into defeats. This transformation is called ``reduction'' and can follow different intuitions. Below, we describe four such reductions examined by Kaci et al. \citeyear{DBLP:conf/comma/KaciTV18}, noting where they were first proposed in the literature.


\begin{definition}\label{Def: reductions}
Given a preference-based argumentation framework $\AS = (\A,\C,\preceq)$, where $\preceq$ is CC-wise total order on $\AS$, we can obtain four abstract argumentation frameworks, called \emph{reductions}, by transforming attacks into \emph{defeats} using $\preceq$ on $\AS$:

\begin{itemize}
    \item $\AS^1_{\preceq} = (\A, \C_1)$, where $\C_1 = \{ (a,b) \mid  (a,b) \in \C, b \preceq a, \text{ or } (b,a) \in C, b \prec a \}$ \citep{amgoud2010role},
    \item $\AS^2_{\preceq} = (\A, \C_2)$, where $\C_2 = \{ (a,b) \mid  (a,b) \in \C, b \preceq a, or (a,b) \in C, (b,a) \not \in \C \}$ \citep{DBLP:conf/comma/KaciTV18}, and
    \item $\AS^3_{\preceq} = (\A, \C_1  \cup \C_2)$ \citep{DBLP:conf/comma/KaciTV18},
    \item $\AS^4_{\preceq} = (\A, \C_4)$, where $\C_4 = \{ (a,b) \in \C \mid b \preceq a\}$ \citep{amgoud2002inferring}.
\end{itemize}

\end{definition}

Once defeats have been computed, justified arguments are identified from the resulting argument framework using standard semantics (e.g., via computing complete labellings using the defeats as attacks).

\begin{example}[Cont'd]\label{example:2}
Consider the order $\preceq$ such that $  a\prec b \prec c \simeq d$. In figures \ref{fig:1b}, \ref{fig:1c}, \ref{fig:1d}, and \ref{fig:1e}, we represent the four AAFs originating from $\AS$ of Example \ref{example:1} using the four different reductions on $\preceq$. The solid red edges are used to indicate that a preference has been applied (resulting in a defeat), the dashed red edges act as a visual aid to mark the edges for Reduction 4 that are no longer present.
\label{ex:po}

\end{example}

%% file: contribution.tex
\label{sec:pref-red1}

As described above, each preference-based reduction treats preference orders over arguments differently. Rather than working directly with preferences over arguments, we introduce a unifying formalism in the form of \emph{preference functions}. These label attacks as $1$ if the source is preferred or equal to the target, and $0$ otherwise, subject to the constraint that the resulting attack structure remains free of inconsistent cycles.
We formalize this below.


 \begin{definition}[Preference function]\label{definition: preference function}
    A \emph{preference function} over an argumentation framework $\AS = (\A,\C)$ is a function $f\colon \C \to \{0,1\}$ such that any cycle in $\overline{F_0} \cup F_1$ is completely contained in $F_1\setminus \overline{F_0}$, i.e., the cycle is consistent, where
    \begin{align*}
        &F_0 = f^{-1}(\{0\}) \text{ and }F_1 = f^{-1}(\{1\}).
    \end{align*}
 \end{definition}

In the above definition if $(a,b)\in \overline{F_0}$, then $b\prec a$ and if $(a,b)\in F_1$ then $b\preceq a$. 
%
%
A cycle in $\overline{F_0} \cup F_1$ implies a cycle of preferences, while a cycle in $F_1 \backslash \overline{F_0}$ implies a cycle among equally preferred arguments.

Therefore, any cycle in $\overline{F_0} \cup F_1$ that is not contained in $F_1 \backslash \overline{F_0}$ contains one strict preference, i.e., it is of the following form $v_0 \prec v_1 \preceq v_2 \preceq \dots v_m \preceq v_0$, and our preferences are inconsistent.

\begin{example}[Cont'd]    
The function $f$ associated with $\preceq$ of Example \ref{example:2} assigns $0$ to $(a,b),(b,c)$, and $(a,c)$, and it assigns $1$ elsewhere. Notice that the red arrows in $\AS_\preceq^1$ represent the set $\overline{F_0}$, whereas the black ones represent $F_1\setminus \overline{F_0}$. The set $\overline{F_0}\cup F_1$ is therefore retrieved by taking all the arrows. One can identify inconsistent cycles as the ones that pass through at least one red arrow. Since the only cycle does not pass through red arrows, we know that $f$ is a preference function over $\AS$.

Now consider the three-cycle AAF $\mathcal{G}=(\{a,b,c\},\{(a,c),(c,b),(b,a)\})$. Assume that the associated function $f$ is one that assigs $0$ to all edges. There is then is a cycle in $\overline{F_0}\cup F_1$ and it is not contained in $F_1\setminus \overline{F_0}=\emptyset$. Thus, $f$ is not a preference function over $\AS$.

\end{example}

We recall that the inverse problem for PIAS (IPIAS): given an argumentation framework, an argumentation semantics, and a labelling specifying the status of arguments, determine the preferences between arguments that, when combined with the given semantics, yield the desired labelling.
We focus on the existence of a solution to IPIAS w.r.t.\ preference-based reductions under the complete semantics. Namely, we study the decision problem $\VPE_i$: 

\begin{quote}
``Given an argumentation framework $\AS = (\A,\C)$ and an arbitrary labelling $l$ on $\A$, is there a CC-wise total order $\preceq$ on $\AS$ such that $l$ is a complete labelling of the reduced framework $\AS^i_\preceq$?''
\end{quote}




\subsection{Preference-based Reduction 1: Attack reflection}


We begin by studying $\VPE_1$. In the following, we will be working mostly with preference functions. Therefore, we find it more practical to introduce the notion of an \textit{Inverting Preference Graph}, linking preference functions and reductions via Reduction 1 of Definition \ref{Def: reductions}. 

\begin{definition}[Inverting preference graph]\label{Def:IPG}
    An \emph{inverting preference graph (IPG)} of $\AS = (\A,\C)$ is an argumentation framework $\AS' = (\A,\C')$ such that $\C' = \overline{F_0} \cup F_1$ for some preference function $f$ over $\AS$.
\end{definition}

We now show that there is a correspondence between graphs obtained using Reduction 1 and IPGs.

\begin{proposition}\label{proposition:link1}
Let $\AS = (\A,\C)$ be an AAF. Any IPG of $\AS$ can be seen as a reduction $\AS_\preceq ^1$ of $\AS$ under some ordering $\preceq$ via Reduction 1, and conversely.
\end{proposition}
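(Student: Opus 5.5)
The plan is to prove both directions by making the natural dictionary between orders and preference functions explicit: an attack $(a,b)\in\C$ gets $f(a,b)=1$ iff $b\preceq a$. In both directions I will check two things: that $f$ satisfies the cycle condition of Definition~\ref{definition: preference function}, and that $\C_1 = \overline{F_0}\cup F_1$. Throughout I read $\preceq$ as a preorder, so ties $\simeq$ are allowed, as the paper's notation does.

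\emph{From orders to IPGs.} Fix a CC-wise total order $\preceq$ and set $f(a,b)=1$ iff $b\preceq a$, for $(a,b)\in\C$.
\begin{itemize}
\item $F_1$ is exactly the first clause of $\C_1$.
\item $F_0$ consists of the attacks $(a,b)$ with $a\prec b$. This uses totality, which applies because $a$ and $b$ lie in the same connected component. Hence $\overline{F_0}=\{(b,a)\mid (a,b)\in\C,\ a\prec b\}$, which is exactly the second clause of $\C_1$.
\end{itemize}
So $\C_1=\overline{F_0}\cup F_1$. For the cycle condition, every edge $(x,y)$ of $\overline{F_0}\cup F_1$ gives $y\preceq x$, and an edge of $\overline{F_0}$ gives $y\prec x$. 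A cycle is contained in a single connected component, so transitivity applies along it. If the cycle used an edge of $\overline{F_0}$, we would get $v\prec v$, a contradiction. Hence every cycle lies in $F_1\setminus\overline{F_0}$, and $\AS^1_\preceq$ is an IPG.

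\emph{From IPGs to orders.} This is the main step. Let $f$ be a preference function and $G=(\A,\overline{F_0}\cup F_1)$.
\begin{itemize}
\item Take the strongly connected components of $G$. By the cycle condition, no edge of $\overline{F_0}$ lies on a cycle, so every $\overline{F_0}$-edge joins two distinct SCCs.
\item The condensation of $G$ is a DAG. Fix a linear extension of it in which an edge from SCC $X$ to SCC $Y$ forces $Y$ strictly below $X$.
\item Define $a\preceq b$ iff the SCC of $a$ is at or below the SCC of $b$. This is a total preorder on $\A$, so in particular it is CC-wise total.
\end{itemize}
By construction, every edge $(x,y)\in F_1$ gives $y\preceq x$, and every edge $(x,y)\in\overline{F_0}$ gives $y\prec x$ strictly, because $x$ and $y$ are in different SCCs.

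It remains to check $\C_1=\overline{F_0}\cup F_1$, taking care with mutual attacks. For $(a,b)\in\C$:
\begin{itemize}
\item If $f(a,b)=1$, then $b\preceq a$, so $(a,b)\in\C_1$.
\item If $f(a,b)=0$, then $a\prec b$, so $(b,a)\in\C_1$ by the second clause.
\end{itemize}
Conversely, suppose $(a,b)\in\C$ and $b\preceq a$. Then $f(a,b)=1$, since $f(a,b)=0$ would force $a\prec b$. Likewise, if $(a,b)\in\C$ and $a\prec b$, then $f(a,b)=0$. So each clause of $\C_1$ contributes exactly the edges of $F_1$, respectively $\overline{F_0}$, and the two sets coincide.

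The expected obstacle is this converse direction: showing that the acyclicity-modulo-ties condition in Definition~\ref{definition: preference function} is exactly what lets the SCC condensation produce an order in which all $\overline{F_0}$-edges are strict. The rest is bookkeeping on the two clauses of Reduction~1.
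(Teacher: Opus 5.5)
Your proof is correct, and for the direction from orders to IPGs it uses the same function $f_\preceq$ as the paper. There you do more than the paper: you verify the cycle condition, since an $\overline{F_0}$-edge on a cycle would give $v\prec v$ by transitivity. The paper only says $f_\preceq$ is a preference function ``if $\preceq$ does not contain inconsistencies''.

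The real difference is in the direction from IPGs to orders. The paper reads the order straight off the edges, setting $a\preceq_f b$ iff $(a,b)\in F_1\setminus\overline{F_0}$ and $a\prec_f b$ iff $(a,b)\in\overline{F_0}$, and then checks the two clauses of $\C_1$ edge by edge. That relation is defined only on attacked pairs, so it is neither transitive nor total on a connected component. Read literally, it is also oriented opposite to the stated convention that $(a,b)\in F_1$ means $b\preceq a$. The step of extending it to a CC-wise total order without collapsing a strict pair is left implicit. Your SCC condensation with a linear extension supplies exactly that step. It also shows where the consistency requirement on preference functions is actually used: it forces every $\overline{F_0}$-edge to join distinct SCCs, so every such edge becomes strict. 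The paper's route is shorter; yours produces a genuine order and handles mutual attacks explicitly through the two-sided clause check.

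Your reading of $\preceq$ as a total preorder is necessary, not just convenient. With the antisymmetric total orders of the background section, take a mutual attack with $f\equiv 1$: its IPG keeps both directions, and no strict order produces that under Reduction 1. So ties are needed for the statement to hold, and the paper's own examples (e.g.\ $c\simeq d$) use them.
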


\begin{proof} 
If $\AS',\AS$, and $f$ are as in Definition \ref{Def:IPG}, the order given by
\begin{align*}
    &a\preceq_f b \iff (a,b)\in F_1\setminus \overline{F_0},\\
    &a\prec_f b \iff (a,b)\in \overline{F_0}
\end{align*}
is such that $\AS'=\AS^1_{\preceq_f}$. Indeed, $(a,b)$ is an attack in $\AS^1_{\preceq_f}$ if and only if $(a,b)\in \C$ and $b\preceq_f a$, in which case $f((a,b))=1$, or $(b,a)\in \C$ and $b\prec_f a$, in which case $f((b,a))=0$.

Conversely, let $\preceq$ be an order on $\AS$, and observe that the function
\[
    f_\preceq ((a,b))= \begin{cases}
        0 \hspace{2em} &\text{if }a\prec b,\\
        1 & \text{otherwise}
    \end{cases}
\]
is a preference function if $\preceq$ does not contain inconsistencies. The IPG $\AS'=(\A,\C')$ associated with $f_\preceq$ contains an attack $(a,b)\in \C'$ if and only if $f((b,a))=0$ or $f((a,b))=1$ if and only if there exists an attack $(b,a)\in \C$ and $b\prec a$ or there exists an attack $(a,b)$ and $b \preceq a$. This is exactly the definition of $\C_1$ in Definition \ref{Def: reductions}, thus $\AS'=\AS^1_\preceq$.
\end{proof}

\begin{proposition}\label{proposition: VPE2}
    The $\VPE_1$ problem for $(\AS,l)$ is answered positively if and only if the following three conditions hold:
        \begin{enumerate}
                \item $\lin$ labelled arguments can only be attacked and can only attack $\out$ labelled arguments, i.e.,
                \[
                        \big((I\times I) \cup (I\times U) \cup (U\times I)\big)\cap \C = \emptyset;
                \]
                \item every $\out$ labelled argument attacks or is attacked by an $\lin$ labelled argument, i.e.,
                \[
                        \forall a\in O \Big(\exists b \in I\big((a,b)\in \C \lor (b,a)\in \C\big)\Big);
                \]
                \item every connected component of $\AS_U$ contains a cycle.
        \end{enumerate}
\begin{proof}
        We begin by proving the only if statement. We suppose that at least one of the numbered conditions fails and we prove there is no solution for $\VPE_1$.

        Suppose that the first condition fails and that $(a,b)\in \C\cap(I\times I)$, the other cases are treated in the same manner. Any preference function $f$ on $\AS$ either assigns $1$ to $(a,b)$, in which case $(a,b)$ is in the associated IPG, or $f$ assigns $0$ to $(a,b)$, in which case $(b,a)$ is in the IPG. In both cases, the edge results in an attack between $\lin$ labelled arguments, against the completeness of $l$. Thus, $\VPE_1$ has no solutions.

        Similarly, if the second condition fails, suppose $a$ is an $\out$ labelled argument that witnesses it. That is, $a$ is labelled out and neither attacks nor is attacked by any $\lin$ labelled arguments. By construction, an IPG of $\AS$ does not introduce any new edges between two nodes that are not connected in $\AS$, thus $a$ cannot be attacked by $\lin$ labelled arguments in any IPG. Thus, $l$ is never complete and thus $\VPE_1$ has no solution.
        
         For condition three, observe that any connected component $V\subseteq U$ that does not contain a cycle must contain an element $v$ that has no attackers in $V$, i.e., an $\undec$ labelled argument without $\undec$ labelled attackers. Moreover, there is no way to add an attacker of $v$ that lives outside of $V$. Thus, an IPG should introduce a new cycle using preferences. However, such a cycle would be inconsistent. Thus, there is no IPG of $\AS$ making $l$ complete and $\VPE_1$ has no solution.

        We now move to the ``if statement'' and prove it directly by constructing an IPG, i.e., we produce a preference function $f$ on $\AS$.
        We start by defining the restriction of $f$ to $\AS_U$, which we call $f_U$. If $U=\emptyset$, there is nothing to define. Otherwise, consider $V\subseteq U$ a connected component of $\AS_U$. Identify a cycle in it, which we know exists by condition 3, and let $V_0\subseteq V$ be the set formed by the arguments that belong to the cycle. For all the other $n\in \mathbb{N}$, define
        \[
                V_{n+1}= \{v\in V\setminus V_n\mid \exists u \in V_n \left((v,u)\in \C \lor(u,v)\in \C\right)\}.
        \]
        These (finitely many non-empty) sets do not overlap and their union exhausts $V$, thus they yield a ranking on the arguments in $V$, namely $rk(u)=n$ iff $u\in V_n$. We define $f$ on the restriction of $\AS$ to $V$ by reversing all attacks from higher-ranked arguments to lower-ranked arguments, i.e., 
        \[
                f_V((u,v))=\begin{cases}
                        1 \hspace{2em} \text{ if }rk(u) \leq rk(v),\\
                        0 \hspace{2em} \text{ otherwise}.
                \end{cases}
        \]
        Now, all arguments in $V$ have an attacker in $V$. Indeed, if an argument has rank $0$ then it is in $V_0$, i.e., it is part of a cycle. Otherwise, it lives in a $V_{n+1}$ for some $n$, therefore there exists a $u\in V_n$ such that either $u$ attacks $v$ or $v$ attacks $u$. Either way, $f_V$ transforms it into an attack from $u$ to $v$.

        Observe also that no new loops are being introduced inside of $V$ by doing this. This is because the new attacks happen between two arguments of different rank, and no attacks from higher-ranked arguments to lower-ranked arguments are allowed, thus it would be impossible to loop back to the lower-ranked argument.

        We let $f_U$ be the union of the fucntions $f_V$ for all connected components $V$ of $\AS_U$. By condition 1, the remaining attacks in $\C$ involve at least one $\out$ labelled argument. For those, we define
        \[
                f_O((u,v))=\begin{cases}
                        1 \hspace{2em} l(v)=\text{out},\\
                        0 \hspace{2em} \text{otherwise}.
                \end{cases}
        \]
        By doing this, and given condition 2, we assure that every $\out$ labelled argument has an $\lin$ labelled attacker. Finally, we define
        \[
                f = f_O \cup f_U.
        \]
        By construction, $f$ is a function from $\C$ to $\{0,1\}$ and if it is a preference function, $l$ is complete on the IPG associated to $f$. We are left to check that $f$ is a preference function. We suppose there is a cycle,
        \[
                v_0\rightarrow v_1\rightarrow \dots \rightarrow v_n\rightarrow v_0,
        \]
        which is not completely contained in $F_1\setminus \overline{F_0}$ and look for a contradiction. If $v_i\notin O$, for all $0\leq i \leq n$, by condition 1 we get that they are all labelled $\undec$. Thus, they all live in the same connected component of $\AS_U$, and we already know these cycles will be contained in $F_1\setminus \overline{F_0}$ in the construction of $f_V$.
        In the other case, without loss of generality, assume $v_0$ is labelled $\undec$. Examining the definition of $f_O$, we get that $v_1\in O$, since the attack $(v_0,v_1)$ is in $F_1\cup \overline{F_0}$. Iterating this way of reasoning, we conclude that all the arguments in the cycle are labelled $\out$. Again by the definition of $f_O$, this implies that all the attacks in the cycle are assigned $1$ by $f$, therefore the cycle is contained in $F_1\setminus \overline{F_0}$. Contradiction. We conclude that $f$ is a preference function and $\VPE_1$ is answered positively.
\end{proof}
\end{proposition}

It follows that $\VPE_1$ is polynomial as it amounts to examining $I$, $O$, and $U$, as well as their cartesian products and intersections, and checking for the existence of a cycle in each connected component of $U$. 
Furthermore, following the construction in the proof, one can also provide a solution to the IPIAS$_1$ in polynomial time.

\subsection{Preference-based Reduction 2: Weak attack removal}

As we did for Reduction 1 in Definition \ref{Def:IPG}, we provide a definition that links a reduction via an order to a choice of preference function for the case of Reduction 2. For any argumentation framework $\AS = (\A,\C)$, we indicate with $\C_\leftrightarrow \subseteq \C$, the collection of its bidirectional attacks, i.e., 
\[
    \C_\leftrightarrow = \{(a,b)\in \C\mid (b,a)\in \C\}.
\]

\begin{definition}[Weakened preference sub-graph]\label{Def:WPG}
    A \emph{weakened preference graph (wPSG)} of $\AS = (\A,\C)$ is an inverting preference graph $\AS' = (\A,\C')$ such that its associated preference function $f$ satisfies, for all $a,b \in \C\setminus\C_\leftrightarrow$
    \[
        f((a,b))=1.
    \]
\end{definition}

\begin{proposition}\label{proposition:link2}
    Let $\AS = (\A,\C)$ be an AAF. Any wPSG of $\AS$ can be seen as a reduction $\AS_\preceq ^2$ of $\AS$ under some ordering $\preceq$ via Reduction 2, and conversely.
\end{proposition}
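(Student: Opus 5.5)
The plan is to reuse Proposition \ref{proposition:link1} and then check that the extra constraint defining a wPSG (every non-bidirectional attack gets value $1$) matches exactly the behaviour of Reduction 2 on non-bidirectional attacks.

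\emph{From a wPSG to an order.} Let $\AS'=(\A,\C')$ be a wPSG with preference function $f$. Take the order $\preceq_f$ built in the proof of Proposition \ref{proposition:link1}, so that $\AS'=\AS^1_{\preceq_f}$. We show $\C_1=\C_2$ for this order, and for this we compare the two sets attack by attack.

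\begin{itemize}
\item \emph{A pair $(a,b)$ with $(a,b)\in\C_\leftrightarrow$.} Here both $(a,b)$ and $(b,a)$ are attacks. The second clause of $\C_2$ does not apply, so $(a,b)\in\C_2$ iff $b\preceq a$. The definition of $\C_1$ gives $(a,b)\in\C_1$ iff $b\preceq a$ or $b\prec a$, which is again $b\preceq a$. The two sets agree.
\item \emph{An attack $(a,b)\in\C\setminus\C_\leftrightarrow$.} By definition of a wPSG, $f((a,b))=1$, so $(a,b)\in F_1$ and $(a,b)\in\C_1$. Since $(b,a)\notin\C$, we also have $(a,b)\in\C_2$ by its second clause. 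The reverse pair $(b,a)$ is in $\C_1$ only if $f((a,b))=0$, which is false, and it is in $\C_2$ only if $(b,a)\in\C$, which is also false. The two sets agree.
\end{itemize}

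Pairs not touched by $\C$ in either direction are in neither set. Hence $\AS'=\AS^2_{\preceq_f}$.

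\emph{From an order to a wPSG.} Let $\preceq$ be a CC-wise total order. Do not use $f_\preceq$ directly, since it may assign $0$ to a one-directional attack $(a,b)$ with $a\prec b$. Instead define $g((a,b))=1$ for $(a,b)\in\C\setminus\C_\leftrightarrow$, and $g((a,b))=f_\preceq((a,b))$ on $\C_\leftrightarrow$.

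By the case analysis above, the graph $\overline{G_0}\cup G_1$ equals $\C_2$ for $\preceq$, provided $g$ is a preference function. That analysis used only the values of $g$ and the order on bidirectional attacks; on those attacks $g$ agrees with $f_\preceq$, which encodes $\preceq$ exactly as in Proposition \ref{proposition:link1}.

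\emph{Main obstacle: the cycle condition for $g$.} Every edge of $\overline{G_0}\cup G_1$ comes from one of two sources:
\begin{itemize}
\item a bidirectional attack on which $\preceq$ is respected, namely $(x,y)\in G_1\setminus\overline{G_0}$ means $y\preceq x$ and $(x,y)\in\overline{G_0}$ means $y\prec x$;
\item a one-directional attack forced to $1$, which may violate the order.
\end{itemize}
Because of the second kind of edge, a cycle could contain a strict step. Nothing forces it to be harmless: in a $3$-cycle of one-directional attacks, all edges get $1$, but the cycle lies in $G_1\setminus\overline{G_0}$ and so is fine. Now consider a cycle through a strict reversal edge $(x,y)\in\overline{G_0}$ together with forced edges. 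Such a cycle may exist for arbitrary $\preceq$.

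So the first thing to try is to read Definition \ref{definition: preference function} loosely: use the order-respecting edges only as certificates, and argue the claimed correspondence at the level of graphs. Concretely, we aim to exhibit some order $\preceq'$ with $\AS^2_{\preceq'}=\AS^2_\preceq$ whose induced function is consistent. Build $\preceq'$ from $\preceq$ restricted to the bidirectional pairs, and make the forced edges consistent by placing them as ties inside an equivalence class. If that fails, a fallback is to weaken the statement to ``the graphs coincide'' and check only the edge sets, as done above. We expect this consistency step to be the delicate point.
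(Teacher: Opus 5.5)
Your forward direction is correct. Routing it through $\C_1=\C_2$ for $\preceq_f$ (which holds once $f\equiv 1$ on $\C\setminus\C_\leftrightarrow$) is a clean alternative to the paper's direct edge comparison.

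The gap is the converse, and it cannot be closed. You rightly isolate the cycle condition for $g$ as the obstacle. However, your repair of passing to some $\preceq'$ with $\AS^2_{\preceq'}=\AS^2_\preceq$ cannot help, because the edge set already determines the function. Let $h$ be a preference function with $h\equiv 1$ on $\C\setminus\C_\leftrightarrow$ and $\overline{H_0}\cup H_1=\C_2$. On a bidirectional pair where both directions survive, $h$ must be $1$ on both: the assignment $0,0$ also produces both edges, but it is a strict $2$-cycle. On a pair where only $(y,x)$ survives, $h$ must be $0$ on $(x,y)$ and $1$ on $(y,x)$. So $h=g$.

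And $g$ need not be a preference function. Take $\A=\{a,b,c\}$, $\C=\{(a,b),(b,a),(a,c),(c,b)\}$ and $a\prec c\prec b$. Then $\C_2=\{(b,a),(a,c),(c,b)\}$, and the only candidate is $g((a,b))=0$ with $g=1$ elsewhere. But $b\to a\to c\to b$ is a cycle of $\overline{G_0}\cup G_1$ through $(b,a)\in\overline{G_0}$. Hence $\AS^2_\preceq$ is not a wPSG, and the ``conversely'' half of the proposition is false as stated.

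The paper's proof of the converse uses exactly your $g$ and only compares edge sets, which is your fallback. It never checks that $g$ is a preference function, so it has the same hole; there is no trick you missed. What is actually provable is this: every wPSG equals some $\AS^2_\preceq$, and $\AS^2_\preceq$ is a wPSG if and only if $g$ is a preference function. Equivalently, no cycle of $\AS^2_\preceq$ passes through the surviving direction of a bidirectional attack whose other direction was deleted. The later characterisation of $\VPE_2$ is unaffected: its proof only uses that Reduction~2 deletes one direction of some bidirectional attacks and changes nothing else, with no appeal to the converse.
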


\begin{proof}
    Suppose $\AS'$ is a wPSG of $\AS$ and let $f$ be its associated preference function. Consider the order $\preceq_f$ as in Proposition \ref{proposition:link1} and let $\AS^2_{\preceq_f}$ be the reduction of $\AS$ over $\preceq_f$ via Reduction 2. We have that $(a,b)$ lives in $\AS^2_{\preceq_f}$ if and only if $(a,b)\in \C$ and $b\preceq_f a$ or if $(a,b)\in \C$ and $(b,a)\notin \C$. Both cases happen if and only if $f((a,b))=1$, thus $(a,b)\in \C'$, and we have $\AS'=\AS^2_{\preceq_f}$.

    Conversely, if $\preceq$ is an ordering on $\A$, let $f((a,b))=f_\preceq((a,b))$ for all $(a,b)\in \C_\leftrightarrow$, where $f_\preceq$ is the one in Proposition \ref{proposition:link1}, and $1$ elsewhere. Any unidirectional attack present in $\AS$ will be present in all of its reductions and, according to the definition of wPSG, in any of its wPSGs. At the same time, for a bidirectional attack $(a,b)\in\C_\leftrightarrow$ we have two cases. First, both attacks appear in $\C_2$ if and only if $b\simeq a$ if and only if $f$ assigns $1$ to both attacks. Second, $(a,b) \not \in \C_2$ if and only if $a \prec b$ if and only if $f$ assigns $0$ to $(a,b)$ and $1$ to $(b,a)$. That is to say, $\AS_\preceq^2=\AS'$.
\end{proof}

\begin{proposition}
    The $\VPE_2$ problem for $(\AS,l)$ is answered positively if and only if $l$ is already complete over $\AS$. 
    \begin{proof}
        Let $\AS=(\A,\C)$ be an arbitrary argumentation graph and suppose \(l\) is a complete labelling over $\AS$. Suppose $(a,b)\in\C$ is such that $(b,a)\notin \C$ and let $\C'=\C\cup\{(b,a)\}$. Then $l$ is complete over $\AS'=(\A,\C')$. Indeed, if $(a,b)$ is part of a complete labelling, it means that: $a$ is $\lin$ and $b$ is $\out$, or the contrary; or $a$ is $\undec$ and $b$ is $\out$, or the contrary; or both are $\undec$. In all five cases adding an  attack in the opposite direction will not invalidate the completeness of $l$.

        Since preferences in Reduction 2 can only turn bidirectional attacks into unidirectional attacks, we conclude that the existence of any wPSG making $\VPE_2$ true implies that $l$ was already complete on the original graph.
        
        Conversely, if $l$ is already complete, the trivial ordering where every two arguments are equivalent yields $\AS_\preceq^2=\AS$, thus $\VPE_2$ admits a solution.
    \end{proof}
\end{proposition}

Notice that this implies $\VPE_2$ for an arbitrary input $(\AS,l)$ can be answered in polynomial time as it amounts to checking that $l$ is complete over $\AS$ \citep{DBLP:books/sp/09/DunneW09}.

\subsection{Preference-based Reduction 3: Attack reflection and weak removal.}

The case for $\VPE_3$ is very similar to that of $\VPE_1$ but we can further relax the third condition. First, we need to link Reduction 3 to preference functions.

\begin{definition}[Combined preference graph]\label{def:CPG}
    A \emph{combined preference graph} (CPG) of $\AS = (\A,\C)$ is an AAF $\AS' = (\A,\C')$ such that $\C' = (\C\setminus \C_\leftrightarrow) \cup (\overline{F_0} \cup F_1)$ for some preference function $f$ over $\AS$.
\end{definition}

\begin{proposition}
    Let $\AS = (\A,\C)$ be an AAF, all CPGs of $\AS$ are reductions $\AS_\preceq ^3$ of $\AS$ under ordering $\preceq$ via Reduction 3.
\end{proposition}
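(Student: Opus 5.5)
The plan is to reduce the claim to Proposition~\ref{proposition:link1}, using the fact that $\C_1\cup\C_2$ differs from $\C_1$ only by the unidirectional attacks of $\C$. Let $\AS'=(\A,\C')$ be a CPG of $\AS$ with preference function $f$, so that $\C'=(\C\setminus\C_\leftrightarrow)\cup(\overline{F_0}\cup F_1)$. I will take the order $\preceq_f$ built from $f$ exactly as in the proof of Proposition~\ref{proposition:link1}, and show that $\AS^3_{\preceq_f}=\AS'$.

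\textbf{Step 1: the IPG part.} Proposition~\ref{proposition:link1} gives $\overline{F_0}\cup F_1=\C_1$, where $\C_1$ is computed with respect to $\preceq_f$. So it suffices to show
\[
\C_1\cup\C_2 = \C_1\cup(\C\setminus\C_\leftrightarrow).
\]

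\textbf{Step 2: unfolding $\C_2$.} By Definition~\ref{Def: reductions},
\[
\C_2=\{(a,b)\in\C\mid b\preceq_f a\}\cup\{(a,b)\in\C\mid (b,a)\notin\C\}.
\]
The second set is exactly $\C\setminus\C_\leftrightarrow$. The first set is contained in $\C_1$, because the first clause of the definition of $\C_1$ is the same condition. Hence $\C_1\cup\C_2=\C_1\cup(\C\setminus\C_\leftrightarrow)=\C'$, which proves $\AS'=\AS^3_{\preceq_f}$. This step is a routine set computation.

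\textbf{Step 3: $\preceq_f$ is a legitimate order.} This is the main obstacle, and it is glossed over in Proposition~\ref{proposition:link1}. The relation $\preceq_f$ is defined only on attacking pairs, so it must be turned into a CC-wise total order without changing any comparison between attacking pairs. I would proceed as follows.
\begin{itemize}
\item Take the reflexive–transitive closure $R$ of $\overline{F_0}\cup F_1$, read so that $(a,b)$ means "$b$ is at most as preferred as $a$".
\item Collapse the strongly connected components of $R$ into equivalence classes.
\item The consistency condition in Definition~\ref{definition: preference function} says that every cycle lies in $F_1\setminus\overline{F_0}$. Hence no $\overline{F_0}$-edge joins two arguments in the same class, so every strict edge relates distinct classes.
\item The quotient is a DAG. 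Within each connected component of $\AS$, take a topological (linear) extension of it, and order members of a class as equal.
\end{itemize}

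This yields a CC-wise total order $\preceq$ with the following properties:
\begin{itemize}
\item $(a,b)\in F_1\setminus\overline{F_0}$ implies $b\preceq a$;
\item $(a,b)\in\overline{F_0}$ implies $b\prec a$;
\item for every $(a,b)\in\C$ exactly one of $f((a,b))=1$ or $f((a,b))=0$ holds, and the order reproduces it: $f((a,b))=1$ forces $b\preceq a$, and $f((a,b))=0$ forces $a\prec b$.
\end{itemize}

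Since Reductions~1 and~2 consult the order only on pairs $(a,b)$ with $(a,b)\in\C$ or $(b,a)\in\C$, replacing $\preceq_f$ by this extension leaves $\C_1$ and $\C_2$ unchanged. Steps~1 and~2 therefore go through verbatim, and $\AS'=\AS^3_\preceq$.
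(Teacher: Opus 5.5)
Your proof is correct and follows the paper's route. You build the order from $f$ and check $\C_1\cup\C_2=\C'$; where the paper enumerates the three membership cases directly, you cite Proposition~\ref{proposition:link1} and absorb the $b\preceq a$ clause of $\C_2$ into $\C_1$. The paper also proves the converse, which this statement does not ask for.

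Your Step~3 is a genuine improvement. The paper never checks that $\preceq_f$ extends to a CC-wise total order. That relation is only specified on attacking pairs, and as written in Proposition~\ref{proposition:link1} its orientation is reversed relative to the remark after Definition~\ref{definition: preference function}. Your condensation-plus-topological-sort construction supplies exactly this, and it yields $b\preceq a \iff f((a,b))=1$ for every $(a,b)\in\C$. From that property $\C_1=\overline{F_0}\cup F_1$ follows in one line, so it is cleaner to derive Step~1 directly from it rather than cite Proposition~\ref{proposition:link1}.
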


\begin{proof}
     Let $\AS, \AS'$, and $f$ be as in Definition \ref{def:CPG}. Let $\preceq_f$ be as in Proposition \ref{proposition:link1} and consider $\AS^3_{\preceq_f}$. An attack $(a,b)$ lives in $\AS^3_{\preceq_f}$ if and only if one of the following three cases happens. First, $(a,b)\in \C$ and $b\preceq_f a$, which happens if and only if $f((a,b))=1$. Second, $(b,a)\in \C$ and $b\prec_f a$, which happens if and only if $f((b,a))=0$. Third, $(a,b)\in \C$ and $(b,a)\notin \C$, which happens if and only if $(a,b)\in \C\setminus \C_\leftrightarrow$. All three cases happen if and only if $(a,b)\in \C'$ of Definition \ref{def:CPG}, thus $\AS' = \AS^3_{\preceq_f}$.
     
     Conversely, let $\preceq$ be an order on $\A$. We let $f_\preceq$ be as in Proposition \ref{proposition:link1} and $\AS'$ its associated CPG. We have that $(a,b)$ lives in $\AS^3_{\preceq}$ if and only if it lives in one of $\AS^2_\preceq$ $\AS^1_\preceq$. We already know the first case happens if and only if $(a,b)\in \overline{F_0}\cup F_1$ (see Proposition \ref{proposition:link1}). For the second case, observe that if $(a,b)\in \C_\leftrightarrow$ we can resort to Proposition \ref{proposition:link2}, and that otherwise $(a,b)\in \C\setminus \C_\leftrightarrow\subseteq \C'$, thus $\AS'=\AS^3_\preceq$.
\end{proof}

\begin{proposition}\label{proposition: VPE3}
    The $\VPE_3$ problem for $(\AS,l)$ is answered positively if and only if the following three conditions hold:
        \begin{enumerate}
                \item $\lin$ labelled arguments only attack or are attacked by $\out$ labelled arguments
                \[
                        \big((I\times I) \cup (I\times U) \cup (U\times I)\big)\cap \C = \emptyset;
                \]
                \item every $\out$ labelled argument attacks or is attacked by an $\in$ labelled argument, i.e.,
                \[
                        \forall a\in O \Big(\exists b \in I\big((a,b)\in \C \lor (b,a)\in \C\big)\Big);
                \]
                \item every $\undec$ labelled argument attacks or is attacked by an $\undec$ labelled argument, i.e.,
                \[
                        \forall u\in U \Big(\exists v \in U\big((u,v)\in \C \lor (v,u)\in \C\big)\Big);
                \]
        \end{enumerate}
    \begin{proof}
    As in Reduction 1, if there is no attack (respectively, there is an attack) between the arguments $a$ and $b$ in $\AS$, then there is no attack (resp., there is an attack) between them in all CPGs of $\AS$. Thus, if even one of the three conditions fails, then there is no solution to $\VPE_3$.

    Assume the three conditions hold and consider a connected component $V$ of $\AS_U$. If all $v\in V$ already have an attacker in $V$, we let $f_V$ be $1$ for all attacks in $\C\cap (V\times V)$. Additionally, if $V$ contains a cycle, we define $f_V$ as in the proof of Proposition \ref{proposition: VPE2}. Otherwise, suppose $V$ does not contain a cycle. Notice that $V$ cannot contain bidirectional attacks. Pick an attack $(u,v)\in \C\cap (V\times V)$. Since $u\neq v$, by imposing $v \succ u$, Reduction 3 adds the attack $(u,v)$, thus we have a cycle in $V$ and we can construct $f_V$ as we did in the proof for Reduction 1. The same argument on ranks disproves the existence of inconsistent cycles.
    
    The rest of the proof is identical.
    \end{proof}
\end{proposition}

Observe that the third condition is equivalent to requiring that no $\undec$ labelled arguments be isolated in $\AS_U$.

This proof shows we can decide $\VPE_3$ in polynomial time. Furthermore, as for Reduction 1, we can provide a solution to the IPIAS$_3$ problem in polynomial time.

\subsection{Preference-based Reduction 4: Attack removal}

Similar to the previous sub-sections, we begin by connecting Reduction 4 to preference functions.

\begin{definition}[Filtering preference sub-graph]\label{def:PSG}
    A \emph{filtering preference sub-graph} (fPSG) of an AAF $\AS = (\A,\C)$ is an argumentation framework $\AS' = (\A,\C')$, where $\C' = F_1$ for some preference function $f$ over $\AS$.
\end{definition}

\begin{proposition}
    Let $\AS = (\A,\C)$ be an AAF, all fPSGs of $\AS$ are reductions $\AS_\preceq ^4$ of $\AS$ under ordering $\preceq$ via Reduction 4.
\end{proposition}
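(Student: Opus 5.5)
We follow the pattern of Propositions \ref{proposition:link1} and \ref{proposition:link2}: show that an fPSG coincides with the Reduction 4 framework for the order $\preceq_f$, and conversely that every Reduction 4 framework is the fPSG of $f_\preceq$.

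For the first direction, let $\AS'=(\A,F_1)$ be an fPSG of $\AS$ with preference function $f$. Define $\preceq_f$ exactly as in the proof of Proposition \ref{proposition:link1}: $a\preceq_f b$ iff $(a,b)\in F_1\setminus\overline{F_0}$, and $a\prec_f b$ iff $(a,b)\in\overline{F_0}$, closed transitively within each connected component and extended to a CC-wise total order. Consistency of $f$ (no cycle in $\overline{F_0}\cup F_1$ meets $\overline{F_0}$) guarantees that these constraints have no strict cycle. Hence a linear extension exists, and it respects every strict and every weak constraint. Inside a strongly connected class of $F_1\setminus\overline{F_0}$ we make all arguments equivalent, so that the weak constraints along consistent cycles are realised as $\simeq$.

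We then check edge by edge. An attack $(a,b)\in\C$ lies in $\C_4$ iff $b\preceq_f a$. If $f((a,b))=1$, then $(a,b)\in F_1$. This edge cannot also be in $\overline{F_0}$, since that would require $f((b,a))=0$, giving the cycle $a\to b\to a$ through $\overline{F_0}$, which is forbidden. So $(a,b)$ imposes $b\preceq_f a$ and the attack survives. If $f((a,b))=0$, then $(b,a)\in\overline{F_0}$ imposes $a\prec_f b$, so $b\not\preceq_f a$ and the attack is removed. Thus $\C_4=F_1$ and $\AS'=\AS^4_{\preceq_f}$.

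For the converse, given a CC-wise total order $\preceq$, take $f_\preceq$ from Proposition \ref{proposition:link1}. It is a preference function: any cycle in $\overline{F_0}\cup F_1$ is a chain of $\preceq$-relations, so if it contained a strict step $\prec$ we would get $a\prec a$. Moreover $F_1=\{(a,b)\in\C\mid a\not\prec b\}=\{(a,b)\in\C\mid b\preceq a\}$ by totality, which is exactly $\C_4$.

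The main obstacle is the first direction: building a genuine CC-wise total order from $f$ that realises every $1$ as $\preceq$ and every $0$ as $\prec$. The key step is the acyclicity argument, namely that the quotient by strongly connected classes of $F_1\setminus\overline{F_0}$ is a DAG whose topological sort yields the order. This argument was left implicit in Proposition \ref{proposition:link1}, and we would make it explicit here.
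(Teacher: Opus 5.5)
Your route is the paper's: take $\preceq_f$ from Proposition~\ref{proposition:link1}, check edge by edge that $\C_4=F_1$, and for the converse use $f_\preceq$ with totality to get $F_1=\C_4$. Your construction by strongly connected classes and topological sort fills in what the paper's one-line proof leaves implicit. It shows that the constraints from $f$ extend to a genuine CC-wise total order in which every $1$ becomes $\preceq$ and every $0$ becomes $\prec$. The key point is that no edge of $\overline{F_0}$ can lie inside a strongly connected class, since it would then lie on a cycle meeting $\overline{F_0}$.

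There is, however, one false step. You claim that an edge $(a,b)\in F_1$ cannot also lie in $\overline{F_0}$, because $f((b,a))=0$ would create the cycle $a\to b\to a$. It does not. If $f((a,b))=1$ and $f((b,a))=0$, then $(a,b)\in F_1\cap\overline{F_0}$. Meanwhile $(b,a)$ is in neither $F_1$ (as $f((b,a))=0$) nor $\overline{F_0}$ (as $f((a,b))=1$), so no $2$-cycle appears. This is just a bidirectional attack with a strict preference, and it occurs in the paper's running example: $f((a,c))=0$ and $f((c,a))=1$, so $(c,a)\in F_1\cap\overline{F_0}$.

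The conclusion of that step survives for a different reason. Such an edge imposes the strict constraint $b\prec_f a$, which implies $b\preceq_f a$. So drop the disjointness claim and simply say that every edge of $F_1$ imposes at least $b\preceq_f a$.

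Relatedly, the definition of $\preceq_f$ you copied ($a\preceq_f b$ iff $(a,b)\in F_1\setminus\overline{F_0}$, and $a\prec_f b$ iff $(a,b)\in\overline{F_0}$) points the opposite way from how you use it. It also conflicts with the convention stated right after the definition of preference functions. It should read: $(a,b)\in F_1$ imposes $b\preceq_f a$, and $(a,b)\in\overline{F_0}$ imposes $b\prec_f a$. With these two corrections the argument is complete.
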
 

\begin{proof}
    Let $\AS, \AS'$, and $f$ be as in Definition \ref{def:CPG}. Let $\preceq_f$ be as in Proposition \ref{proposition:link1} and consider $\AS^4_{\preceq_f}$. We have that $(a,b) \in \AS^4_{\preceq_f}$ if and only if $(a,b)\in \C$ and $b\preceq_f a$ if and only if $f((a,b))=1$. Thus $\AS'=\AS^4_{\preceq_f}$.
    
    The converse implication is identical choosing $f_\preceq$ to be the preference function associated to an arbitrary ordering $\preceq$ over $\A$.
\end{proof}

\input{reduction4}


\begin{proposition}
    Let $(\AS,l)$ be such that $\VPE_3$ and $\VPE_4$ are answered positively. Then $l$ is complete over $\AS$.
    \begin{proof}
        Recall the bullet points in Definition \ref{def:complete labelling}. If the first does not already hold for $l$, $\VPE_3$ is answered negatively by Proposition \ref{proposition: VPE3}. On the other hand, if the second bullet does not hold, then either an $\out$ labelled argument is not attacked by any $\lin$ arguments, in which case $\VPE_4$ is answered negatively (as  Reduction 4 can only remove attacks), or there is an argument which is labelled $\undec$ even though it is attacked by an $\lin$ labelled argument, in which case $\VPE_3$ is answered negatively. Lastly, if the first two conditions hold the third follows. We conclude that $l$ is complete over $\AS$.
    \end{proof}
\end{proposition}

We observe that the proof holds when we substitute $\VPE_1$ in place of $\VPE_3$, meaning that shared solutions across these reductions are only the ones in which $l$ is already a complete labelling for $\AS$. The converse implication of both cases follows trivially, since the order that sees all arguments as equivalent yields $\AS_\preceq^i=\AS$ for all $i=1,2,3,4$.

\begin{corollary}
    Let $S_i$, $1 \leq i \leq 4$, be the class of $(\AS,l)$ for which the problem $\VPE_i$ answers positively. It holds that $S_1\cap S_4 = S_3 \cap S_4 = S_2$, $S_1\subseteq S_3$.
\end{corollary}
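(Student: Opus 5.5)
We will establish the three set relations separately, using the characterisations already proved: $S_2$ is exactly the class of pairs $(\AS,l)$ with $l$ complete over $\AS$ (the Proposition on $\VPE_2$). Propositions \ref{proposition: VPE2} and \ref{proposition: VPE3} characterise $S_1$ and $S_3$. The last Proposition shows $S_3\cap S_4$ consists of complete labellings, and the remark after it extends this to $S_1\cap S_4$.

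\emph{Step 1: $S_1\subseteq S_3$.} Take $(\AS,l)\in S_1$. By Proposition \ref{proposition: VPE2}, conditions 1 and 2 hold, and these coincide verbatim with conditions 1 and 2 of Proposition \ref{proposition: VPE3}. It remains to check condition 3 of Proposition \ref{proposition: VPE3}. Let $u\in U$. By condition 3 for $\VPE_1$, the connected component $V$ of $\AS_U$ containing $u$ contains a cycle. If $|V|=1$, that cycle is a self-attack $(u,u)\in\C$, so $u$ is attacked by an $\undec$ argument, namely itself. Otherwise $V$ is connected with at least two elements, so $u$ has a neighbour $v\in U$ with $(u,v)\in\C$ or $(v,u)\in\C$. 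In both cases $u$ is not isolated in $\AS_U$, so Proposition \ref{proposition: VPE3} gives $(\AS,l)\in S_3$. The only care needed is the degenerate single-node cycle, which is exactly the self-loop case just handled.

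\emph{Step 2: $S_1\cap S_4=S_3\cap S_4=S_2$.} For the inclusion $\subseteq$: the preceding Proposition gives $S_3\cap S_4\subseteq S_2$. By Step 1, $S_1\cap S_4\subseteq S_3\cap S_4\subseteq S_2$, so we do not even need to rerun that proof with $\VPE_1$. For the inclusion $\supseteq$: if $l$ is complete over $\AS$, take the order in which all arguments are equivalent. This order is CC-wise total, and it yields a preference function that is constantly $1$; all its cycles lie in $F_1\setminus\overline{F_0}=\C$, so it is consistent. Under this order $\AS^i_\preceq=\AS$ for every $i$, since every attack satisfies $b\preceq a$ and no strict preference arises. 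Hence $l$ is complete over every reduction, and $(\AS,l)\in S_1\cap S_3\cap S_4$. Combining the two inclusions gives the equalities.

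The main obstacle is modest. It is mainly the self-loop subtlety in Step 1, together with verifying that the trivial order really reproduces $\AS$ under all four reductions. For Reduction 2 in particular, the condition that $(a,b)\in\C$ with $b\preceq a$ holds for every attack, so every attack is kept.
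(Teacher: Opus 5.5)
Your proof is correct and follows the paper's own reasoning. You get $S_1\subseteq S_3$ by comparing the third conditions in the characterisations of $\VPE_1$ and $\VPE_3$, and your self-loop case is exactly the detail that needs checking. You get $S_3\cap S_4\subseteq S_2$ from the preceding Proposition together with the characterisation of $\VPE_2$. The reverse inclusions come from the all-equivalent order, under which every $\AS^i_\preceq$ equals $\AS$.

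The only deviation is a harmless shortcut: you obtain $S_1\cap S_4\subseteq S_2$ through $S_1\subseteq S_3$, rather than by rerunning the preceding proof with $\VPE_1$ in place of $\VPE_3$. Also, like the paper, you tacitly read \emph{total order} as total preorder when you call the all-equivalent relation CC-wise total, since that relation is not antisymmetric.
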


Note that $S_i$, $1 \leq i \leq 4$, are distinct sets. Indeed, consider the argumentation graph $\AS=(\{a,b\},\{(a,b)\})$ and the three labellings $l_1(a)=\undec=l_1(b)$; $l_2(a)=\out$, $l_2(b)=\lin$; and $l_3(a)=\undec$, $l_3(b)=\lin$. Only $\VPE_3$ answers $(\AS,l_1)$ positively  and only $\VPE_4$ answers $(\AS,l_3)$ positively, from which we gather that $S_3\neq S_4$ and $S_1 \neq S_3$, thus $S_1\subsetneq S_3$ and $S_2\subsetneq S_3,S_4$. Since $(\AS,l_2)$ is answered positively by $\VPE_1$ and negatively by $\VPE_2$ and $\VPE_4$, we get $S_1\neq S_4$ and $S_2\subsetneq S_1$.

%% file: reduction4.tex
\begin{proposition}
\label{proposition: VPE4}

  The $\VPE_4$ problem for $(\AS,l)$ can be answered in polynomial time.
  \begin{proof}
    We divide this proof in three main steps. First, we prove we can safely ignore $\out$ labelled argument, as they can be handled in polynomial time and do not impact on the rest of the proof. Then, we introduce the decision problem $\RANK(\AS,l)$ and prove it is equivalent to $\VPE_4$ for $(\AS,l)$. Finally, we provide a polynomial time algorithm to solve $\RANK(\AS,l)$.

    \textbf{Step 1:} Observe that we can check in polynomial time (i.e., in $|\A| \cdot |\C|$ operations) that all $\out$ labelled arguments have an attack coming from an $\lin$ labelled argument. 
    Moreover, if an $\out$ labelled argument does not have an $\lin$ labelled attacker, we stop the search and answer the problem negatively.

    Suppose every $\out$ labelled argument has an $\lin$ labelled attacker. We can define the restriction of a preference function $f$ over $\AS$ to the attacks involving $\out$ labelled arguments as in the proof of Proposition \ref{proposition: VPE2}, i.e.,
    \[
      f_O ((u,v)) = \begin{cases}
	1 \hspace{2em} \text{if } l(v) = \out,\\
        0 \hspace{2em} \text{otherwise}.
      \end{cases}
    \]
    We observe that when searching for a preference function that answers $\VPE_4$, we can always assume it has this structure. Indeed, given any preference function $g$ over $\AS$ that makes $l$ complete, consider the function
    \[
      g_f ((u,v)) = \begin{cases}
	g((u,v)) \hspace{2em} \text{if } l(v) \neq \out \neq l(u),\\
	f_O((u,v)) \hspace{2em} \text{otherwise}.
      \end{cases}
    \]
    Since $g_f$ assigns $1$ to all attacks targeting $\out$ labelled arguments, any $\out$ labelled argument which is without an $\lin$ labelled attacker in the fPSG associated to $g_f$ is also without $\lin$ labelled attackers in the fPSG associated to $g$. Instead, if an argument $v$ has an $\lin$ labelled attacker $u$, either $v$ is labelled $\out$, in which case the attack is preserved, or $g_f((u,v))=g((u,v))$. Either way, the second condition of Definition \ref{def:complete labelling} is satisfied. The other two conditions of completeness are satisfied because $g_f$ agrees with $g$ on all the attacks whose source and target are not labelled $\out$. Thus, suppose $g_f$ is not a preference function over $\AS$, meaning there is an inconsistent cycle
    $
	    v_0 \to v_1 \to \dots \to v_n \to v_0
    $
    in $\overline{G_0} \cup G_1$, with $G_0=g_f^{-1}(\{0\})$ and $G_1=g_f^{-1}({1})$. Suppose that $v_0$ is labelled $\out$. Then $(v_0,v_1)\in G_1\setminus \overline{G_0}$, otherwise $(v_1,v_0)\in G_0$, meaning $g_f((v_1,v_0))=f_O((v_1,v_0))=0$, which is against the definition of $f_O$. Thus, $g_f((v_0,v_1))=1$ and we also get that $v_1$ is labelled $\out$. By iterating this piece of reasoning, we get that all arguments in the cycle are labelled $\out$ and all attacks are in $G_1\setminus G_0$, hence the cycle is consistent. Alternatively, none of the arguments $v_i$, for $0\leq i \leq n$, is labelled $\out$, and thus this cycle witnesses that $g$ is not a preference function over $\AS$, by the definition of $g_f$, leading to a contradiction. Therefore $g_f$ is a preference function over $\AS$, it has the desired form, and reducing $\AS$ by $g_f$ with Reduction 4 yields an argumentation framework for which $l$ is complete.

    \textbf{Step 2:} The construction above suggests that $\out$ labelled argument can be ignored in the rest of the proof, as by fixing $f_O$ we ensure that there are no cycles involving $\out$ labelled arguments and their condition for a complete labelling is already met. Hence, we assume $O=\emptyset$ going forward.

\begin{figure}
    \centering
 \begin{tikzpicture}[
    scale=0.08,
    main node/.style={circle, draw, font=\sffamily\Large\bfseries, minimum size=0.8cm},
    edge style/.style={->, >=Stealth, thick, shorten >=1pt}
]

    \node[main node, fill=green, label=above:{$0$}] (a) {a};
    \node[main node, fill=green, label=below:{$2$}] (f) [below=of a] {f};

    \node[main node, fill=green, label=above:{$1$}] (b) [right=of a] {b};
    \node[main node, fill=gray, label=below:{$2$}] (e) [below=of b] {e};

    \node[main node, fill=gray, label=above:{$2$}] (c) [right=of b] {c};
    \node[main node, fill=green, label=below:{$3$}] (d) [below=of c] {d};

    \path[edge style]
        (b) edge (a)
        (f) edge (b)
        (e) edge (b)
        (c) edge [bend right=15] (e)
        (d) edge (c)
        (e) edge [bend right=15] (c);
\end{tikzpicture}

\par\medskip 

\begin{tikzpicture}[
    scale=0.08,
    main node/.style={circle, draw, font=\sffamily\Large\bfseries, minimum size=0.8cm},
    edge style/.style={->, >=Stealth, thick, shorten >=1pt}
]

    \node[main node, fill=green] (a) {a};
    \node[main node, fill=green] (f) [below=of a] {f};

    \node[main node, fill=green] (b) [right=of a] {b};
    \node[main node, fill=gray] (e) [below=of b] {e};

    \node[main node, fill=gray] (c) [right=of b] {c};
    \node[main node, fill=green] (d) [below=of c] {d};

    \path[edge style]
        (b) edge (a)
        (f) edge (b)
        (e) edge (b)
        (c) edge [bend right=15] (e)
        (d) edge (c)
        (e) edge [bend right=15] (c)
        (b) edge (d);
\end{tikzpicture}

    \caption{Top: a labelled graph where $\RANK$ (Algorithm \ref{alg:rank}) answers \textbf{YES} and returns the ranking represented by the numbers next to the nodes. Bottom: a labelled graph for which $\RANK$ answers \textbf{NO}.}
    \label{fig:exp-RANK}
\end{figure}
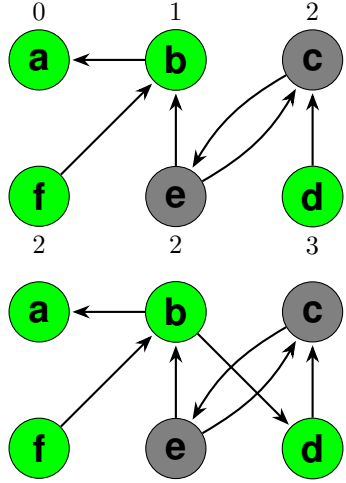

    We prove that $\VPE_4$ is equivalent to the following problem
    \begin{center}
      $\RANK(\AS,l)$: Given an argumentation framework $\AS=(\A,\C)$ and a labelling $l$ over $\AS$, decide whether there exists a function $\psi\colon \A \to \mathbb{Z}$, such that:
      \begin{enumerate}
	\item if $(u,v)\in \C$ and at least one of $u$ and $v$ is labelled $\lin$, then $\psi(u) > \psi(v)$; and 
	\item for every $\undec$ labelled argument $u$ we have that
          \[
	    \psi(u) \geq min\{\psi(v) \mid v\in \Att(u)\cap U\},
          \]
          where the minimum is $+\infty$ if the set is empty.
      \end{enumerate}
    \end{center}
    We call such a $\psi$ a \textit{ranking function}. An illustration of this decision problem is provided in Figure \ref{fig:exp-RANK}.
    
    Suppose we have a positive answer for $\RANK(\AS,l)$ and let $\psi\colon \A \to \mathbb{Z}$ be a witness to it. Let $f_\psi$ be a function such that it assigns $0$ to all attacks $(u,v)$ for which $\psi(u)>\psi(v)$ and $1$ otherwise. It follows that, for every $(u,v)\in \overline{F_0}\cup F_1$, we have $\psi(u)\leq \psi(v)$. This entails that the arguments in any given cycle in $\overline{F_0}\cup F_1$ all have the same value according to $\psi$. Thus all the attacks in the cycle belong to $F_1\setminus \overline{F_0}$.
    Therefore, $f_\psi$ is a preference function over $\AS$ and we need to prove that the fPSG of $\AS$ associated to $f_\psi$ sees $l$ as complete.
    We already know the second condition required for completeness is satisfied, since $O=\emptyset$ from the previous step. Let $a$ be an $\lin$ labelled argument. By the first condition on $\psi$, all attackers of $a$ will have a larger $\psi$-value than $a$ and all arguments attacked by $a$ will have a smaller $\psi$-value than $a$. Either way, the definition of $f_\psi$ will delete those attacks. Therefore, arguments labelled $\lin$ are isolated in the fPSG associated to $f_\psi$. Conversely, suppose that an argument $u$ has no attackers in the fPSG. If $u$ is labelled $\undec$, by the second condition of ranking function $\psi(u)=+\infty$, which is against $\psi$ having codomain $\mathbb{Z}$. Thus, $u$ is labelled $\lin$ and we conclude that $l$ is complete on the fPSG associated with $f_\psi$.
	          
    We now prove the converse  by induction on $n=|\A|$. %
    Let $n=1$ and observe that $\VPE_4$ can only be true if the argument is self-attacked and labelled $\undec$, or if it is isolated and labelled $\lin$. In both cases, the function that assigns $0$ to the argument is a ranking function.
      
    Suppose now that the induction hypothesis holds for all argumentation frameworks with at most $n$ arguments and that $\VPE_4$ for $(\AS,l)$ is answered positively. Let $f$ be the associated preference function to one of the fPSGs witnessing it. In the case that all arguments are labelled $\undec$, the function assigning $0$ to all arguments is a ranking function; the existence of a fPSG guarantees that each of them has an attacker. Otherwise, if there exists an $\lin$ labelled argument $c$ consider $\A_c=\A\setminus \{c\}$ and let $l_c$ and $\AS_c$ be the restriction of $l$ and $\AS$ to $\A_c$. Analogously, let $f_c$ be the restriction of $f$ to $\AS_c$. We argue that the fPSG $\AS'_c$ associated with $f_c$ is a witness to the fact that $\VPE_4$ for $(\AS_c,l_c)$ is answered positively. Indeed, $f_c$ must be a preference function, otherwise any inconsistent loop would invalidate $f$ being a preference function; all $\lin$ labelled arguments are isolated in $\AS'_c$, otherwise they would not be isolated in $\AS'$; and since we removed an $\lin$ labelled argument, attacks between $\undec$ labelled arguments are unaltered. Thus, by the inductive hypothesis on $(\AS_c,l_c)$, there exists a ranking function $\psi_c$ on $\A_c$. We consider the set of arguments that can reach $c$ in $\AS$, and that of arguments reachable by $c$ in $\AS$, namely
    \begin{align*}
        B_c = \{a \in \A' \mid \text{there is a path from } a \text { to } c \text{ in } \overline{F_0}\cup F_1\},\\
        A_c = \{a \in \A' \mid \text{there is a path from } c \text { to } a \text{ in } \overline{F_0}\cup F_1\}.
    \end{align*}
    Let $C_c= \A_c\setminus (A_c\cup B_c)$. Since $f$ is a preference function, $A_c\cap B_c=\emptyset$. Let $M = \max \{\psi_c(a)\mid a \in B_c\}$, $m = \min \{\psi_c(a) \mid a \in A_c\}$, and $n = \max (M+2-m, 0)$.
    Finally, we define
    \[
          \psi (a) = \begin{cases}
              \psi_c(a)+n &\text{if } a\in A_c,\\
              M+1 &\text{if } a=c,\\
              \psi_c(a) &\text{otherwise}.

          \end{cases}
    \]
    We check that $\psi$ is a ranking function. Let $(u,v)\in \C$ be such that at least one of $u$ and $v$ is labelled $\lin$, so $(v,u)\in\overline{F_0}$. If $u=c$, then $v\in B_c$, and we have $\psi(v)=\psi_c(v)<M+1=\psi(c)=\psi(u)$. Analogously, if $v=c$, we have $u\in A_c$ and $\psi(u)>M+1=\psi(c)=\psi(v)$. Observe that if $v\in A_c$, then $u$ must also live in $A_c$, since adding the attack $(v,u)\in \overline{F_0}$ to a path from $c$ to $v$ makes a path from $c$ to $u$. Analogously, if $u\in B_c$, also $v\in B_c$. In both of these cases, we know $(u,v)\in \C'_c$ and thus we get $\psi(u)>\psi(v)$ from $\psi_c(u)>\psi_c(v)$.
    Lastly, when $u\notin B_c$ and $v\notin A_c$, we have $\psi(u) \geq \psi_c(u) > \psi_c(v) = \psi(v)$.

    For condition 2 of the ranking function, let $u\in U$ be any $\undec$ labelled argument. We know there exists $v\in \Att(u)\cap U$ s.t.~$\psi_c(v)\leq \psi_c(u)$. If $v\in A_c$, then $u\in A_c$, and thus $\psi(v)=\psi_c(v)+n \leq \psi_c(u)+n = \psi(u)$. Otherwise, the rank of $v$ does not change and we get $\psi(v)=\psi_c(v)\leq\psi_c(u)\leq \psi(u)$. Thus $\psi$ satisfies the conditions to be a ranking function for $(\AS,l)$.

    \textbf{Step 3:} 
    We show that Algorithm \ref{alg:rank}, which instantiates $\RANK$, is polynomial.

\begin{algorithm}[t]
\caption{Computation of $\RANK$}
\label{alg:rank}
\begin{algorithmic}[1]

\State Enumerate the elements of $\A = \{u_1,\dots,u_n\}$
\State $\psi_0(u) \gets 0$ for all $u \in \A$; \label{line:init}
$k \gets 1$; 
$\psi_{k} \gets \psi_{0}$

\Repeat \label{line:repeat}

    \State $\psi_{k,1} \gets \psi_{k}$; 
    $i \gets 1$

    \While{$i \leq n$}

        \If{$u_i$ is labelled $\lin$}
            \State $\psi_{k,i}(u_i) \gets \max\big(\psi_{k}(u_i), \max\{\psi_{k,i}(v)+1 \mid v\in \A \text{ s.t. } (u_i,v)\in \C\}\big)$\label{line:increase in}
            \If{$\psi_{k,i}(u_i) > |\A|$}\label{line:in too big}
                 \Return \textbf{NO} 
            \EndIf

        \ElsIf{$u_i$ is labelled $\undec$}
            \If{$\Att(u_i)\cap U = \emptyset$}\label{line:undec unattacked}
                \Return \textbf{NO} 
            \EndIf

            \State $\psi_{k,i}(u_i) \gets$\label{line:increase undec}
\Statex \hspace{\algorithmicindent}
$\begin{aligned}[t]
  & \max\left(\psi_{k}(u_i),
    \min\{\psi_{k,i}(v)\mid v\in \Att(u_i)\cap U\},\right. \\
    & \left.\max\{\psi_{k,i}(v)+1\mid v\in I \text{ s.t. } (u_i,v)\in \C\}
\right)
\end{aligned}$

            \If{$\psi_{k,i}(u_i) > |\A|$}\label{line:undec too big}
                \Return \textbf{NO} 
            \EndIf
        \EndIf

        \State $\psi_{k,i+1} \gets \psi_{k,i}$; 
        $i \gets i+1$

    \EndWhile

    \State $\psi_{k+1} \gets \psi_{k,n}$
    \State $k \gets k+1$

\Until{$\psi_{k} = \psi_{k-1}$}\label{line:good condition}

\State \Return \textbf{YES},$\psi_k$\label{line:end}

\end{algorithmic}
\end{algorithm}

    Consider the \textbf{repeat-until} cycle. Observe that the rank assignments (lines \ref{line:increase in} and \ref{line:increase undec}) is monotonically non-decreasing, as the value appears in the evaluation of the $\max$. Thus, we only repeat if at least one argument's $\psi$ value has increased before line \ref{line:good condition}. By the Pigeonhole principle, since all arguments are assigned a starting  rank of $0$ (line \ref{line:init}), after $(\lvert \A\rvert+2)^2$ iterations of the \textbf{repeat-until} cycle one of them has a rank greater than $\lvert \A \rvert$, ensuring that the algorithm halts at lines \ref{line:in too big} or \ref{line:undec too big}.
    If we cache $\max$ and $\min$ values (making lines \ref{line:increase in} and \ref{line:increase undec} run in constant time) then complexity of the \textbf{while} loop is $O(|\A|)$ in the worst case. If the algorithm doesn't exit prematurely (i.e., with one of the returns in the \textbf{while} loop) then --- given that the \textbf{repeat-until} loop runs at most $O(|\A|^2)$ times. Thus, the worst case complexity of $\RANK$ is $O(|\A|^3)$.
    
    We now show that the algorithm solves $\RANK(\AS,l)$.

    Suppose $\RANK(\AS,l)$ is answered positively and that $\psi$ is a ranking function for $(\AS,l)$, with $m$ and $M$ being the minimum and maximum rank of the arguments according to $\psi$. Without loss of generality, we can assume $m=0$ and that 
    $M<\lvert A \rvert$. We prove inductively that at any step of the algorithm, it holds that $\psi_{k,i}(u)\leq \psi(u)$ for every argument $u$. Indeed, $\psi_1(u)=0\leq \psi(u)$ for all $u$, since $0$ is the minimum of $\psi$. If at any given step $k,i$ we have $\psi_{k,i}(u)\leq \psi(u)$ for all $u\neq u_i$ and $\psi_k(u_i)\leq \psi(u_i)$, then, assuming $u_i$ is labelled $\lin$, from $\psi(u_i) \geq \psi(v)+1$ for all $v$ such that $(u_i,v)\in\C$ we get $\psi_{k,i}(u_i)\leq \psi(u_i)$. The case for $u_i$ labelled $\undec$ is identical. Notice that since $\psi$ exists, all $\undec$ labelled arguments have an attacker, thus the algorithm will not halt at line \ref{line:undec unattacked}. Moreover, since $M< \lvert A \rvert$, the program does not halt at lines \ref{line:in too big} and \ref{line:undec too big}. We conclude that the program will halt at line \ref{line:good condition} with return \textbf{YES}.

    Conversely, suppose that the algorithm halts with \textbf{YES}, that is to say we exited at line \ref{line:end} and thus $\psi = \psi_{k,n} = \psi_{k,i} =\psi_{k-1, n}$ for some $k$, for all $1\leq i< n$. For every $\lin$ labelled argument $u_i$ and every $v\in \A$ such that $(u_i,v)\in \C$ we have that $\psi(u_i) = \psi_{k,i} (u_i) > \psi_{k-1,n}(v) = \psi(v)$; the same occurs if $u_i$ is labelled $\undec$ and $v$ is an $\lin$ labelled argument such that $(u_i,v)\in \C$. Finally, for every $\undec$ labelled argument $u_i$ there exists an argument $v\in \Att(u)\cap U$ such that $\psi(u_i) = \psi_{k,i}(u,i) \geq \psi_{k-1,n}(v) = \psi(v)$. Thus $\psi$ is equivalent to that in the problem statement of $\RANK(\AS,l)$.
  \end{proof}
\end{proposition}

%% file: related_work.tex
In argumentation theory, researchers tackle inverse problems in a variety of different settings. In the context of abstract argumentation for example, Dunne et al. [\citeyear{DBLP:journals/ai/DunneDLW15}] examined the problem of whether an argumentation framework exists that induces a given set of extensions with respect to some semantics. Mumford et al. [\citeyear{mumford22complexity}] considered the complexity of identifying defeat relations given a semantics, showing that doing so for 3-valued complete semantics can be done in polynomial time, while the problem is NP-complete for extension-based complete semantics. We refer the reader to the latter work for a discussion of other papers which consider similar problems over Dung-style argumentation frameworks. In the context of gradual semantics and ranked semantics, many researchers have considered inverse problems such as inferring attacks and initial weights given the outputs of such systems \citep{DBLP:journals/corr/abs-2203-01201,DBLP:conf/ijcai/OrenYVB22,DBLP:journals/argcom/OrenY23,DBLP:conf/comma/SkibaTRHK22}. 

Unlike the work above, we situate ourselves in the context of preference-based argumentation frameworks. Here, the only work we are aware of which attempts to tackle a similar problem is the one by Mahesar, Oren, and Vasconcelos [\citeyear{mahesar_computing_2018}] (noting that the same authors also considered a similar problem in within ABA \citep{DBLP:conf/prima/MahesarOV20}). However, the current work differs in several important respects. Mahesar et al.~sought to  identify \emph{all} (potentially exponential) solutions to the inverse problem; they considered only a single type of reduction; and they did not examine the complexity of determining whether the inverse problem can be answered positively.

Notably, Bernreiter, Dvo\v r\'ak, Rapberger, and Woltran [\citeyear{bernreiter2024effect}] work with the same reductions as we do and analyze the impact of preferences on the semantics of a narrower class of frameworks, namely well-formed Claim-Augmented Frameworks. Our work differs in the fact that we study an inverse problem (on AAFs) where preferences are reconstructed starting from a desired outcome (completeness of the labelling). As future work, using our work to inform the analysis of preference effects in claim-augmented settings may be interesting.

As noted by Mahesar, Oren, and Vasconcelos [\citeyear{mahesar_computing_2018,DBLP:conf/prima/MahesarOV20}] the problem we tackle here has several potential applications. The most obvious of these is preference elicitation --- given an argumentation framework presented by a user, the semantics they use for reasoning, and the conclusions they believe result, we can identify what preferences they operate under without having to explicitly query these.  Another potential application is in the area of sensitivity analysis, where we seek to identify how much (inferred) preferences must change before some conclusions change. Both of these applications are useful in contexts such as  recommender systems, decision support, and persuasion.

There are several obvious directions for future work. First, rather than focusing on existence of a solution, we can consider the problem of constructing a solution. While Mahesar et al. [\citeyear{mahesar_computing_2018}] has examined elements of this problem, their solution only applies to one reduction type, and is not complete. Furthermore, to overcome the potentially exponential number of solutions, we can seek to identify these solutions under a variety of criteria and constraints such as sparisty (minimising the number of strict preferences), robustness (maximising the number of preferences consistent with some outcome), or alignment with already elicited knowledge. Second, it would be interesting to extend the analysis to other semantics (e.g., preferred, stable, or CF2), and to other types of argumentation systems (e.g., weighted and probabilistic). Such extensions would allow us to understand which elements of our results arise directly due to the nature of the complete semantics, and which elements are more general. Another avenue of future work involves human studies. Here, we would seek to identify under which conditions specific reductions are used, and to determine how explanations around preferences and the reductions can be provided when explaining reasoning. Finally, since information is often noisy and inconsistencies may arise, examining approximate variants of the inverse problem (for example allowing a bounded number of labelling mismatches or attack modifications) and identifying parameterised complexity results could make our framework even more widely applicable and translate to learning-based argumentation settings.

%% file: conclusion.tex
We have considered an inverse problem for preference-based reductions in abstract argumentation. Namely, given an argumentation framework and a labelling, we examined the complexity of deciding whether there is a preference relation over arguments such that, after applying a given reduction, the resulting abstract framework admits the provided labelling as complete. To address this question across four well-known reductions, we introduced preference functions as a reduction-agnostic representation of which attacks are preserved or need to be processed (i.e., removed or reversed), together with global consistency constraints capturing the absence of forbidden preference cycles.

Our analysis shows that the computational difficulty of the inverse problem is polynomial-time solvability for all considered reductions, with a clean structural characterisation of feasible preferences. Beyond mere decidability, our framework also supports reasoning about the space of solutions, clarifying when preferences are essentially determined by the target labelling and when large families of compatible preference functions exist.